\theoremstyle{plain}
\newtheorem{theorem}{Theorem}[section]
\newtheorem{lemma}[theorem]{Lemma}
\theoremstyle{definition}
\theoremstyle{remark}
\icmltitlerunning{Learning Distances from Data with Normalizing Flows and Score Matching}
\begin{document}

\twocolumn[
\icmltitle{Learning Distances from Data with Normalizing Flows and Score Matching}

\icmlsetsymbol{equal}{*}

\begin{icmlauthorlist}
\icmlauthor{Peter Sorrenson}{vll}
\icmlauthor{Daniel Behrend-Uriarte}{vll}
\icmlauthor{Christoph Schnörr}{ipa}
\icmlauthor{Ullrich Köthe}{vll}
\end{icmlauthorlist}

\icmlaffiliation{vll}{Computer Vision and Learning Lab, Heidelberg University, Germany}
\icmlaffiliation{ipa}{Image and Pattern Analysis Group, Heidelberg University, Germany}

\icmlcorrespondingauthor{Peter Sorrenson}{peter.sorrenson@gmail.com}

\icmlkeywords{Machine Learning, ICML}

\vskip 0.3in
]

\printAffiliationsAndNotice{}  %

\begin{abstract}
Density-based distances (DBDs) provide a principled approach to metric learning by defining distances in terms of the underlying data distribution. By employing a Riemannian metric that increases in regions of low probability density, shortest paths naturally follow the data manifold. Fermat distances, a specific type of DBD, have attractive properties, but existing estimators based on nearest neighbor graphs suffer from poor convergence due to inaccurate density estimates. Moreover, graph-based methods scale poorly to high dimensions, as the proposed geodesics are often insufficiently smooth. We address these challenges in two key ways. First, we learn densities using normalizing flows. Second, we refine geodesics through relaxation, guided by a learned score model. Additionally, we introduce a dimension-adapted Fermat distance that scales intuitively to high dimensions and improves numerical stability. Our work paves the way for the practical use of density-based distances, especially in high-dimensional spaces.
\end{abstract}

\section{Introduction}

\begin{figure}[t]
    \centering
    \includegraphics[width=0.98\linewidth]{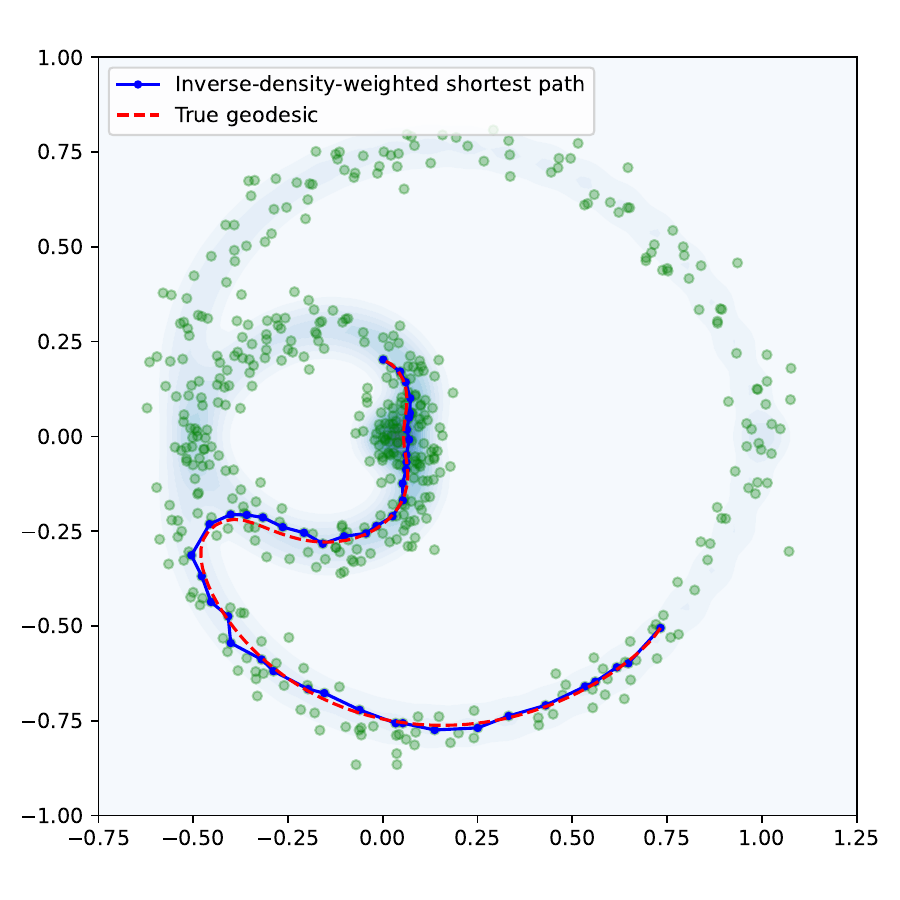}
    \caption{Our method learns distance metrics that respect the underlying data distribution. Here, we show shortest paths between two points in a complex Gaussian mixture distribution. The ground truth geodesic (dashed red) and our graph-based approximation (solid blue) both naturally follow high-density regions where data samples (green dots) are concentrated, rather than taking a direct Euclidean path. This behavior is desirable for many machine learning tasks, as it captures the intrinsic structure of the data manifold.}
    \label{fig:path_comparison}
\end{figure}

Metric learning is fundamental to many machine learning tasks, from clustering to classification. The goal is to determine a distance metric that accurately measures the similarity or dissimilarity between data points. While traditional approaches map data points into spaces where Euclidean distance can be directly applied, these methods are inherently limited by the geometric constraints of Euclidean space, making it impossible to represent arbitrary distance relationships between sets of points.

A more flexible but computationally expensive approach involves defining a Riemannian metric in the data space and solving for geodesic distances. This allows for arbitrary distance relationships, but it requires choosing an appropriate Riemannian metric. Fermat distances \citep{groisman2022nonhomogeneous}, which are a type of density-based distance \citep{bousquet2003measure}, offer an elegant solution: the metric should be conformal (a multiple of the identity matrix) and inversely proportional to the probability density. Intuitively, this means that distances are stretched in regions where data is sparse and compressed where data is dense. In this way, geodesics pass through high-density regions of the data, respecting any inherent manifold structure (see \cref{fig:path_comparison}). This approach is consistent with Fermat's principle of least time in optics, where the inverse density functions like a refractive index.

This density-based approach to metric learning has evolved significantly over the past two decades. The foundational work of \citet{bousquet2003measure} introduced the concept of conformal density-based metrics, though their focus was on semi-supervised learning rather than computing geodesics. \citet{sajama2005estimating} took the first steps toward practical implementation by using kernel density estimation and nearest-neighbor graphs for geodesic computation. A breakthrough came from \citet{bijral2012semi}, who approximated the density as inversely proportional to a power of the Euclidean distance between close neighbors. This elegant approximation sparked numerous theoretical extensions and practical implementations \citep{chu2020exact,little2022balancing,groisman2022nonhomogeneous,moscovich2017minimax,alamgir2012shortest,mckenzie2019power,little2020path}, becoming known as power-weighted shortest path distances (PWSPD) \citep{little2022balancing} or Fermat distances \citep{groisman2022nonhomogeneous,trillos2023fermat}. While \citet{hwang2016shortest} provided theoretical convergence guarantees for these methods, notably absent from all previous work is any comparison to ground truth distances, even in simple cases where they can be computed exactly. Our empirical analysis reveals that existing approaches exhibit poor convergence in practice, highlighting a significant gap between theoretical guarantees and practical performance.

Other notable works which use density-based metrics to inform the learning of data manifolds include metric flow matching \citep{kapusniak2024metric} and the Riemannian autoencoder \citep{diepeveen2024score}, which uses a metric defined in terms of score functions.

To establish ground truth geodesics and distances, we develop a numerically stable relaxation method that directly solves the geodesic differential equations. By carefully handling the reparameterization of paths to maintain constant Euclidean velocity and employing appropriate numerical optimization techniques, we can compute accurate geodesics even in challenging cases with large density variations. This enables us, for the first time, to quantitatively evaluate the accuracy of different approximation methods by comparing them to true geodesics in distributions with known densities, such as mixtures of Gaussians.

With this ground truth framework in place, we develop two key improvements to existing methods. First, by employing more accurate edge weights obtained through normalizing flows, we achieve much faster convergence in graph-based approximations. Second, we address the limitations of these graph-based methods in higher dimensions, where paths become increasingly rough due to the curse of dimensionality, by using a relaxation scheme to smooth trajectories. Interestingly, we find that training models via score matching is more effective than using normalizing flows for this purpose, revealing a trade-off between the accuracy of learned log densities and their gradients (scores).

In summary, our contributions are fourfold: First, we introduce a numerically stable relaxation method to compute ground truth geodesics and distances. Second, we introduce the use of normalizing flows to obtain more accurate edge weights for estimating Fermat distances, achieving significantly faster convergence rates than previous methods. Third, we overcome the limitations of graph-based methods in high-dimensional spaces through a novel smooth relaxation scheme using score matching. Finally, we propose a Fermat distance that naturally scales with dimensionality, making density-based distances practical for real-world, high-dimensional applications. Our approach bridges the gap between theoretical elegance and practical applicability in metric learning.

\section{Background}

\subsection{Riemannian Geometry}

Riemannian geometry provides the mathematical foundation for understanding curved spaces and measuring distances within them. While Euclidean geometry suffices for flat spaces, many real-world datasets lie on curved manifolds where Euclidean distances can be misleading. Riemannian geometry offers powerful tools for analyzing such spaces through the metric tensor, which can encode different geometric structures ranging from positive curvature (like a sphere) to negative curvature (like hyperbolic space), or more complex combinations of both.

More formally, a Riemannian manifold $\mathcal{M}$ is a smooth $n$-dimensional space that locally resembles $\mathbb{R}^n$. In this work, we make the simplifying assumption that $\mathcal{M} = \mathbb{R}^n$. As a result the tangent space $\mathcal{T}_p$, which is a linear approximation of the manifold at $p$, is simply $\mathbb{R}^n$ for all $p$. The metric tensor $g_p: \mathbb{R}^n \times \mathbb{R}^n \to \mathbb{R}$ is a smoothly varying positive-definite bilinear form on the tangent space. It defines an inner product in the tangent space, allowing us to measure lengths and angles.

For a tangent vector $v \in \mathcal{T}_p$, the norm is defined as $\|v\| = \sqrt{g_p(v, v)}$, providing a notion of length in the tangent space. Just as we measure the length of a curve in Euclidean space by integrating its speed, the length of a smooth curve $\gamma(t)$ on the manifold is given by $\int \|\dot{\gamma}(t)\| \, \mathrm{d}t$, where $\dot{\gamma}(t)$ is the derivative of $\gamma(t)$ with respect to $t$. This integral accumulates the infinitesimal lengths along the curve, accounting for how the metric varies across the manifold.

The distance between two points on the manifold is defined as the minimal length of a curve connecting the points. The curves that minimize this length are known as geodesics, which are the generalization of straight lines in Euclidean space to curved spaces. These geodesics satisfy a second-order differential equation known as the geodesic equation (see \cref{app:geodesic-eqn}). In our context, these geodesics will represent optimal paths through the data space that respect the underlying probability density structure.

\subsection{Fermat Distances}

Having established the framework of Riemannian geometry, we now turn to density-based distances (DBDs), which provide a way to define distances between points sampled from a distribution by utilizing the underlying probability density function. This approach can be particularly useful for tasks such as clustering and pathfinding. If designed well, a DBD can capture desirable properties such as:

\begin{enumerate}
\item Shortest paths between points should pass through supported regions of the data. If there is a region with no data between two points, the shortest path should deviate around it to pass through regions with more data.
\item Data should cluster according to its modes. If two data points belong to different modes with little data connecting them, the distance between them should be high.
\end{enumerate}

One way to define a DBD that satisfies these properties is through Fermat distances. Named after Fermat's principle of least time in optics, these distances model the shortest paths between points by considering the inverse probability density (or a monotonic function of it) as analogous to the refractive index.

In the language of Riemannian geometry, we define a metric tensor of the form
\begin{equation}
g_x(u, v) = \frac{\langle u, v \rangle}{f(p(x))^2},
\end{equation}
where $p$ is the probability density, $f$ is a monotonic function, and $\langle \cdot, \cdot \rangle$ is the Euclidean inner product. For example, in a mixture of two Gaussians, this metric would make paths that cross the low-density region between the modes longer than paths that go through high-density regions, effectively capturing the natural clustering in the data. This formulation ensures that regions with higher probability density have lower "refractive index," guiding shortest paths through denser regions of the data. In practice, it is convenient to choose $f(p) = p^{\beta}$, where $\beta$ is an inverse temperature parameter that controls the influence of the density on the metric. Since our metric enacts a simple scaling of the Euclidean inner product, it is a conformal metric, which locally preserves angles and simplifies key geometric objects such as the geodesic equation (see \cref{lemma:christoffel,thm:geodesic}).

The length of a smooth curve $\gamma(t)$ is therefore given by
\begin{equation}
    \label{eq:length-integral}
    L(\gamma) = \int \frac{\|\dot\gamma(t)\|}{p(\gamma(t))^\beta} \, dt
\end{equation}
This integral accumulates the Euclidean length element $\|\dot\gamma(t)\|$, weighted inversely by the density along the path. As a result, segments of the path that pass through low-density regions contribute more to the total length than those through high-density regions.

\paragraph{Fermat Distances Which Scale with Dimension}

\begin{figure}[tb]
    \centering
    \includegraphics[width=0.7\linewidth]{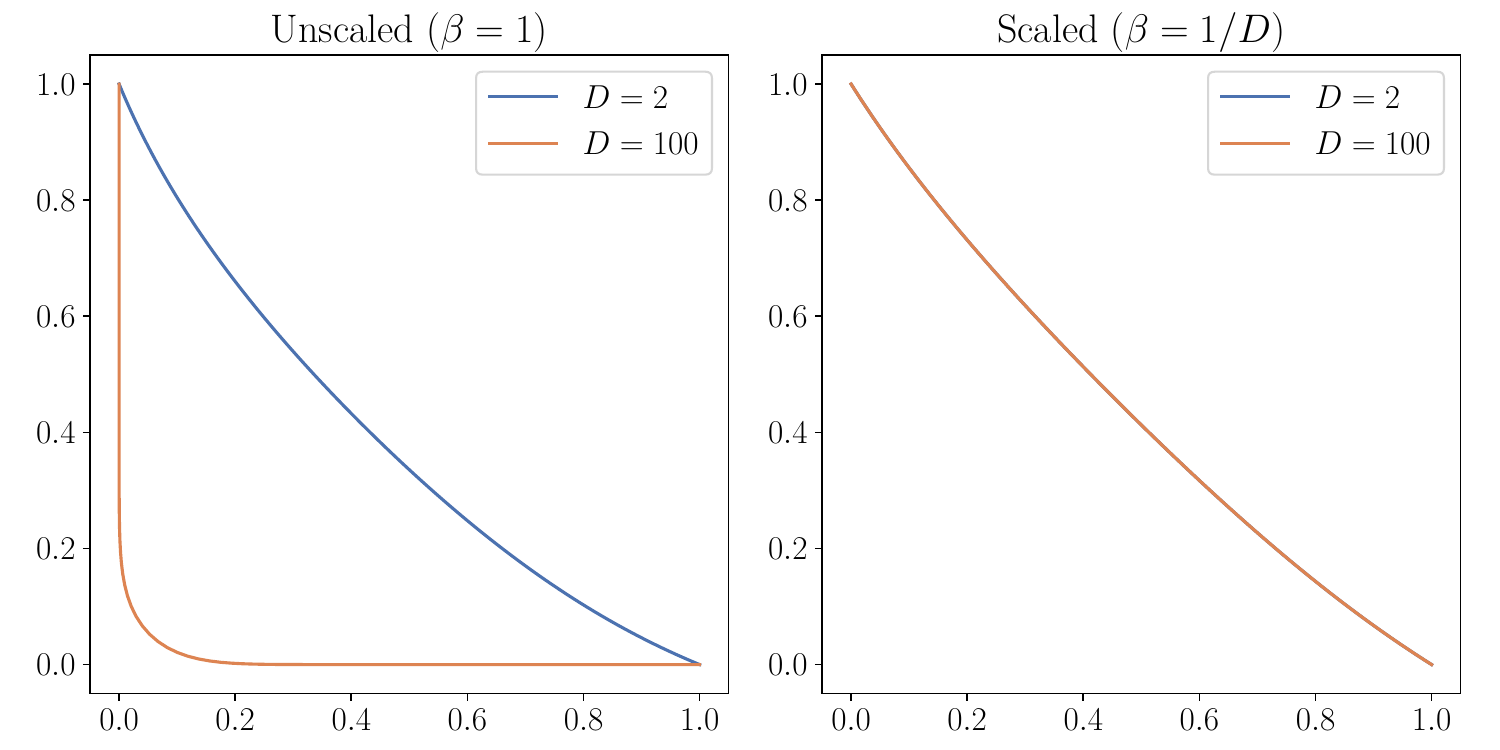}
    \caption{
    Geodesics of the standard normal distribution, projected into 2D, taken between two orthogonal points at distance $\sqrt{D}$ from the origin. The paths are scaled by $\sqrt{D}$ to enable comparison. Left: Unscaled metric (same temperature for all dimensions) leads to sharply curved geodesics in high dimensions. Right: Scaled metric (temperature equal to dimension) leads to consistent geodesics in all dimensions (note that the lines are overlapping).}
    \label{fig:geodesics-unscaled-scaled}
\end{figure}

While the Fermat distance framework is theoretically elegant, setting $\beta=1$ (as is common in previous work) leads to unintuitive behavior in higher dimensions. Consider the standard normal distribution in $D$ dimensions. Due to rotational symmetry, the geodesic connecting $x_1$ and $x_2$ will lie in the plane spanned by these two points and the origin. Without loss of generality, suppose this plane aligns with the first two axis directions. The density restricted to this plane is the same as in the two-dimensional case, up to a constant factor. Therefore, geodesics in the $D$-dimensional standard normal distribution are the same as in the two-dimensional standard normal distribution, once we account for the rotation into the plane of $x_1$ and $x_2$.

However, despite this geometric similarity, there's a crucial scaling issue: the typical distance from the origin of points sampled from the $D$-dimensional standard normal distribution is much larger, with a mean of $\sqrt{D}$. As a result, geodesics will start and end in exponentially low-density regions relative to the two-dimensional standard normal. This forces most of the trajectory to make a dramatic detour towards and then away from the high-density region near the origin, creating increasingly extreme curved paths as dimension grows.

To address this issue, if we scale the temperature linearly with dimension ($\beta = 1/D$), the results become consistent with the two-dimensional case (see \cref{fig:geodesics-unscaled-scaled}). In \cref{app:scaling-justification}, we explore this argument in more detail and generalize it to non-Gaussian distributions. This scaling has two key benefits: First, it ensures that geodesic behavior remains intuitive across dimensions, with similar qualitative properties regardless of the ambient dimension. Second, it significantly improves numerical stability by reducing the effect of the probability density, leading to more Euclidean-like geodesics. This effect is clearly visible in \cref{fig:geodesics-unscaled-scaled}, where the geodesics are much closer to straight lines in the scaled case.

\subsection{Geodesic Equation}

\begin{figure}[tb]
    \centering
    \includegraphics[width=0.85\linewidth]{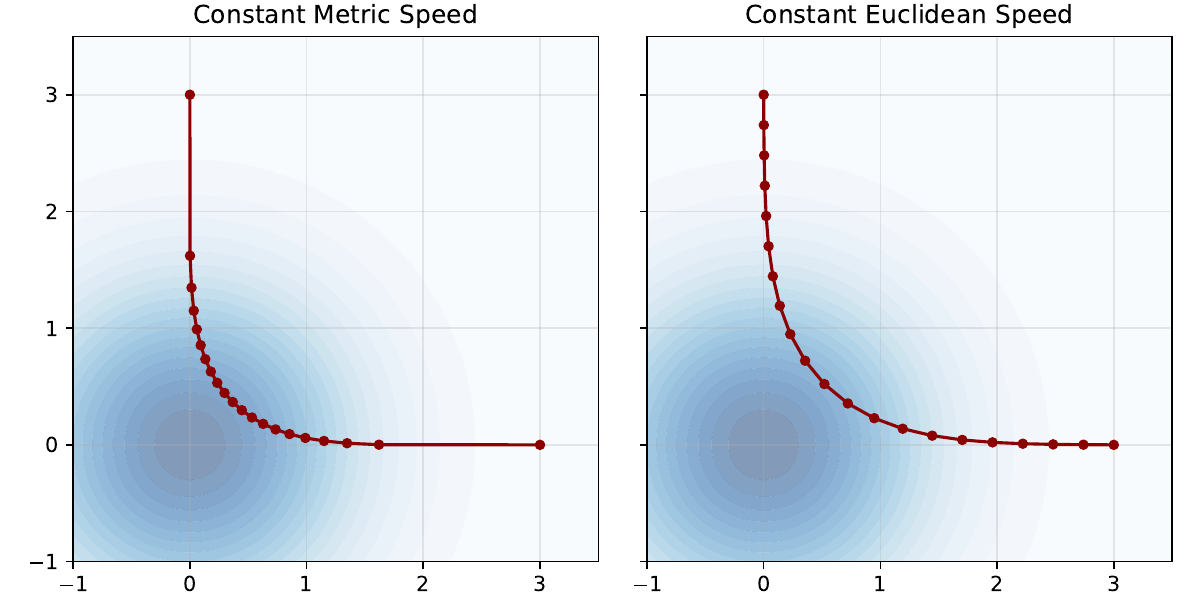}
    \caption{Comparison of geodesic parameterizations. Left: Constant metric speed leads to varying Euclidean step sizes, making numerical relaxation challenging. Right: Constant Euclidean speed enables uniform discretization and more stable numerics. Points along the paths indicate discretization steps. Note the slightly different solutions: the left-hand path is approximately 0.1\% longer.}
    \label{fig:geodesic-parameterizations}
\end{figure}

From the metric, we can compute the geodesic equation, which characterizes the paths of minimum length between points. Just as a straight line in Euclidean space can be described by a second-order differential equation ($\ddot{x} = 0$), geodesics in our curved space satisfy a more complex equation (see \cref{thm:geodesic}). For a geodesic trajectory $\gamma(t)$, we have:
\begin{equation}
    \label{eq:geodesic}
    \ddot{\gamma} - 2\beta(s(\gamma) \cdot \dot{\gamma}) \dot{\gamma} + \beta s(\gamma) \|\dot{\gamma}\|^2 = 0
\end{equation}
where $s(x) = \frac{\partial \log p(x)}{\partial x}$ is the score of the distribution (the gradient of the log density) and $\|\cdot\|$ is the ordinary Euclidean norm.

The geodesic equation has the property that the speed is constant with respect to the metric, meaning $\sqrt{g(\dot{\gamma}, \dot{\gamma})}$ is constant. While mathematically elegant, this creates numerical challenges: when $\gamma$ passes from a high-probability region to a low-probability one, the Euclidean velocities can differ by orders of magnitude. For example, if the probability density drops by a factor of 1000, the Euclidean velocity must increase by the same factor to maintain constant metric speed. This dramatic speed variation makes standard numerical integrators unstable and requires extremely small step sizes in low-density regions while potentially overshooting in high-density regions.

To address these numerical challenges, we reparameterize the equation to maintain constant Euclidean speed instead of constant metric speed (see \cref{thm:constant-speed}). This simplification is particularly valuable for numerical methods like relaxation, where uniform spacing of points along the trajectory leads to better stability. \Cref{fig:geodesic-parameterizations} illustrates how this reparameterization enables uniform discretization compared to the varying step sizes required by constant metric speed. The reparameterized equation differs only slightly from the original (losing just the factor of 2 in the second term):
\begin{equation}
    \label{eq:geodesic-const-eucl-speed}
    \ddot{\varphi} - \beta (s(\varphi) \cdot \dot{\varphi}) \dot{\varphi} + \beta s(\varphi) \|\dot{\varphi}\|^2 = 0
\end{equation}
We stress that, while \cref{eq:geodesic} is easily obtained from standard results on conformal metrics (e.g., \citet[Appendix G]{carroll2004spacetime}), the reparameterized form in \cref{eq:geodesic-const-eucl-speed} is, to our knowledge, a novel contribution.

\subsection{Weighted Graph Algorithm Based on Euclidean Distance}
\label{sec:power-weighted-graph-method}

\begin{figure}[tb]
    \centering
    \includegraphics[width=\linewidth]{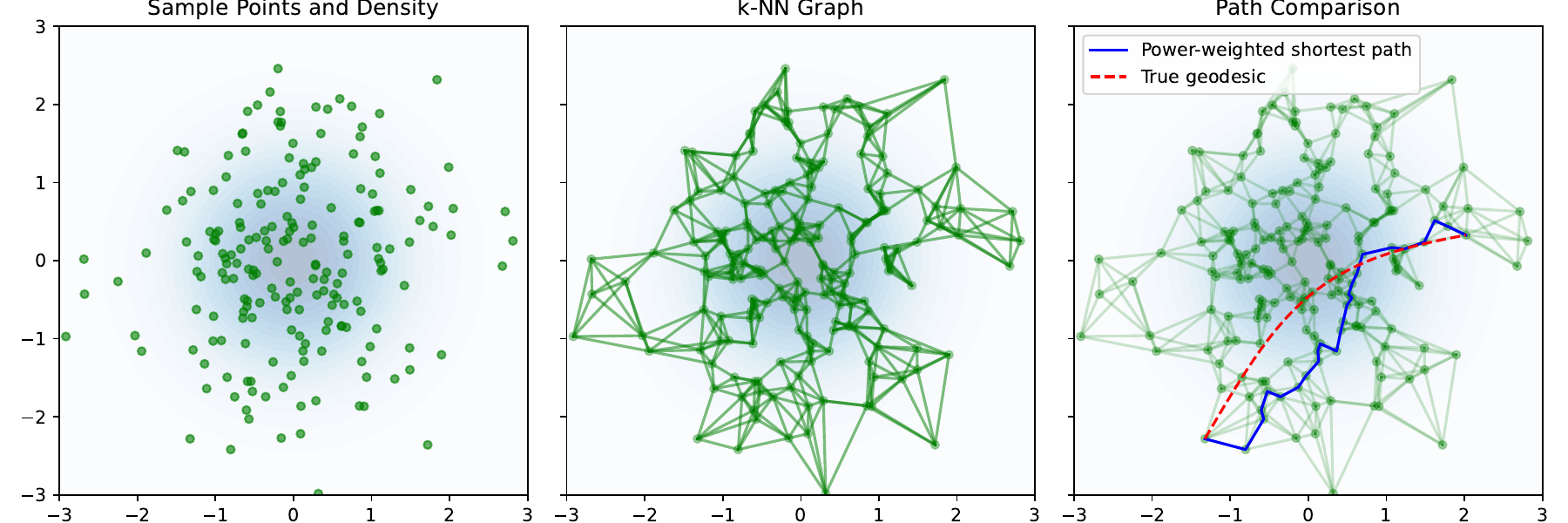}
    \caption{Illustration of the power-weighted graph approximation. Left: Sample points (dots) and density (gradient) from a 2D standard normal. Center: k-nearest neighbor connections between points, with edge weights determined by cubic Euclidean distances. Right: A shortest path (solid blue) found by Dijkstra's algorithm compared to the true geodesic (dashed red). Note how the discrete path approximates the continuous geodesic by following high-density regions, but is not very smooth.}
    \label{fig:graph-approximation}
\end{figure}

A practical approach to computing Fermat distances is to discretize the problem using sample points. If we have a large number of samples from $p(x)$, we can approximate the density between close neighbors $x_1$ and $x_2$ as constant and hence the shortest path between them as a straight line. Following \citet{bijral2012semi}, the density is roughly proportional to the inverse of the Euclidean distance, raised to the power $d$, where $d$ is the intrinsic dimension of the distribution:
\begin{equation}
    p(\gamma) \propto \frac{1}{\|x_1 - x_2\|^d}
\end{equation}
Hence the distance between $x_1$ and $x_2$ according to the metric is approximately proportional to a power of the Euclidean distance
\begin{equation}
    \operatorname{dist}(x_1, x_2) \propto \|x_1 - x_2\|^{\beta d + 1}
\end{equation}
This elegant relationship suggests a simple algorithmic approach: \citet{bijral2012semi} proposes to approximate shortest paths between more distant points by constructing a $k$-nearest neighbors graph of the samples with edge weights given by $\|x_1 - x_2\|^{\beta d + 1}$ and applying Dijkstra's algorithm. \Cref{fig:graph-approximation} illustrates the method.

\citet{hwang2016shortest} gives consistency guarantees for this estimate, proving that a scaled version of it converges to the ground truth distance with a rate of $\exp(-\theta_0 n^{1/(3d+2)})$, with $n$ the sample size, $d$ the intrinsic dimension, and $\theta_0$ a positive constant. \citet{groisman2022nonhomogeneous} additionally show that the shortest paths converge to geodesics in a large sample size limit.

While this approach is appealing in its simplicity, can be easily implemented using standard libraries, and comes with theoretical guarantees, the density estimate is too poor to provide accurate estimates of Fermat distances in practice. In the experimental section \ref{sec:power-graph-convergence} we show that paths computed with this approach converge extremely slowly to the ground truth geodesics as sample size increases, even in very simple 2-dimensional distributions such as the standard normal. 

This significant gap between theoretical guarantees and practical performance leads us to speculate that the unknown constant $\theta_0$ is very small, resulting in extremely slow convergence despite the exponential form of the bound. This observation motivates our development of more accurate density estimation methods in the following sections. We leave a detailed theoretical investigation of this convergence behavior to future work.

\subsection{Normalizing Flows}

Normalizing flows \citep{rezende2015variational,kobyzev2020normalizing,papamakarios2021normalizing} provide a powerful framework for learning complex probability distributions. The key idea is to transform a simple base distribution (like a standard normal) into the target distribution through a series of invertible transformations. Formally, this is achieved via a parameterized diffeomorphism $f_{\theta}$ which maps from data to latent space. The probability density is obtained through the change of variables formula:
\begin{equation}
    p_{\theta}(x) =  \pi(f_{\theta}(x)) \left| \det \left( \frac{\partial f_{\theta}}{\partial x} \right) \right|^{-1}
\end{equation}
where $\pi$ is a simple latent distribution. A common strategy to design normalizing flows is through coupling blocks \citep{dinh2014nice,dinh2016density,durkan2019neural}, where $f_{\theta}$ is the composition of a series of blocks, each with a triangular Jacobian. This makes the determinant easy to calculate, and hence the above formula is tractable. 

Normalizing flows are trained by minimizing the negative log-likelihood on a training set:
\begin{equation}
    \mathcal{L} = \mathbb{E}_{p_{\text{data}}(x)}[-\log p_{\theta}(x)]
\end{equation}

\subsection{Score Matching}

While normalizing flows provide a way to estimate the density directly, our geodesic equations actually depend on the score function $s(x) = \nabla_x \log p(x)$, which is the gradient of the log-likelihood with respect to spatial inputs. Although one could obtain scores by differentiating the density model, we find that these derived scores are too noisy to be used effectively in practice. We speculate this reflects an inherent trade-off between optimal learning of the log density and its gradient, which we illustrate with simple examples in \cref{app:density-score-tradeoff}. Therefore, we turn to score matching \citep{hyvarinen2005estimation}, which offers a direct way to estimate the score function from data, bypassing the need to estimate the density itself.

This family of methods includes several variants such as sliced score matching \citep{song2020sliced}, denoising score matching \citep{vincent2011connection} and diffusion models \citep{song2020score}. For our purposes, we use sliced score matching due to its simplicity and ability to estimate the true score, rather than a noisy score as in denoising score matching or diffusion methods.

\paragraph{Sliced Score Matching} We can learn the score of $p(x)$ by minimizing the score matching objective:
\begin{equation}
    \mathbb{E}_{p(x)}[\operatorname{tr}(\nabla_x s_{\theta}(x)) + \tfrac 1 2 \|s_{\theta}(x)\|^2]
\end{equation}
In high dimensions the trace term is expensive to evaluate exactly, so sliced score matching uses a stochastic approximation (the Hutchinson trace estimator \citep{hutchinson1989stochastic}):
\begin{equation}
    \mathbb{E}_{p(x)p(v)}[v^T \nabla_x s_{\theta}(x) v + \tfrac 1 2 \|s_{\theta}(x)\|^2]
\end{equation}
where $v$ is sampled from an appropriate distribution, typically standard normal.

\section{Methods}

\begin{figure}[ht]
    \centering
    \includegraphics[width=\linewidth]{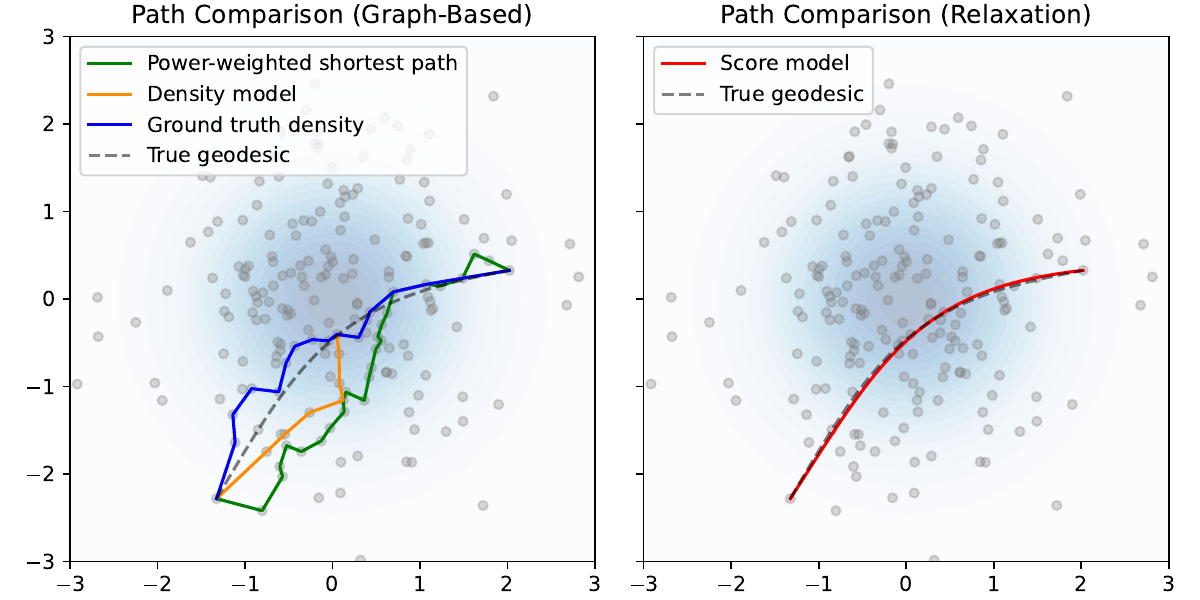}
    \caption{Illustration of geodesic approximations by graph-based methods (left) and relaxation methods (right) on 200 points sampled from the 2D standard normal. Left: the power-weighted shortest path method (\cref{sec:power-weighted-graph-method}) produces the roughest path. The shortest path from an inverse-density weighted graph using a model density (\cref{sec:density-weighted-graph-alg}) is more direct, but not quite as close to the true geodesic as the equivalent method using the ground truth density. The model is defined as Gaussian with the same mean and covariance as the data. Right: relaxation (\cref{alg:relaxation}) on the model score function results in a very close approximation to the true geodesic.}
    \label{fig:path-comparison-all-methods}
\end{figure}

\subsection{Ground Truth Distances through Relaxation} \label{sec:relaxation}

We can solve \cref{eq:geodesic-const-eucl-speed} in several ways to find geodesics between two points. The shooting method starts at one point with an initial velocity guess and integrates the equation forward, iteratively adjusting the initial velocity to reach the target point. Alternatively, the relaxation method discretizes the interval $[0,1]$ into $n$ equal time steps of size $h=1/n$, writing $\varphi_i = \varphi(ih)$ for the curve values at these points. Starting from an initial path (either a straight line between endpoints or a more informed guess), it gradually updates the intermediate points while keeping the endpoints fixed to better satisfy the differential equation. Each method has its advantages: shooting is more efficient for exploring geodesics from a starting point, while relaxation is more robust for finding paths between specific endpoints, especially in complex density landscapes where shooting might miss the target. We opt to use relaxation.

\Cref{eq:geodesic-const-eucl-speed} leads to the relaxation scheme given in \cref{alg:relaxation}, which updates each point along the curve based on its neighbors and the local score function. The random ordering of updates helps avoid oscillatory solutions and improves convergence. See \cref{app:derivations} for a detailed derivation. 

\subsection{Density-Weighted Graph Algorithm}
\label{sec:density-weighted-graph-alg}

To improve upon previous graph-based methods, we replace their density estimates with those from a normalizing flow trained on the data. For nearby points, we can approximate the geodesic connecting them as a straight line and estimate its length through numerical integration. Specifically, given two points $x_1$ and $x_2$, we estimate their distance as:
\begin{align}
    \operatorname{dist}(x_1, x_2) &\approx \sum_{i=1}^S \frac{\|y_i - y_{i-1}\|}{p_{\theta}(y_{i-1/2})^\beta} \\ &=  \frac{\|x_1 - x_2\|}{S} \sum_{i=1}^S \frac{1}{p_{\theta}(y_{i-1/2})^\beta}
\end{align}
where $\{y_i\}_{i=0}^S$ are equally spaced points along the line from $x_1$ to $x_2$ (i.e., $y_0=x_1$ and $y_S = x_2$), and $y_{i-1/2}$ denotes the midpoint between consecutive points.

We use this distance measure to construct a k-nearest neighbors graph, with edge weights given by the approximate geodesic distances rather than Euclidean distances (see  \cref{alg:density-graph}). The resulting sparse graph structure allows us to efficiently compute distances between any pair of points using Dijkstra's shortest path algorithm. Due to the large dynamic range of the density values involved, we perform all distance calculations and graph operations in log space to maintain numerical stability.

\begin{algorithm}[tb]
   \caption{One step of relaxation (iterate until convergence)}
   \label{alg:relaxation}
\begin{algorithmic}
   \STATE {\bfseries Input:} trajectory $\varphi = \{\varphi_i\}_{i=0}^n$, score function $s$, inverse temperature $\beta$
   \STATE $I = \{1,\dots,n-1\}$
   \STATE $\mathtt{shuffle}(I)$
   \FOR{$i$ {\bfseries in} $I$}
   \STATE $v_i = \frac{1}{2} (\varphi_{i+1} - \varphi_{i-1})$
   \vspace{2pt}
   \STATE $w_i = \frac{1}{2} \beta [s(\varphi_i) \|v_i\|^2 - (s(\varphi_i) \cdot v_i) v_i]$
   \vspace{2pt}
   \STATE $\varphi_i = \frac{1}{2} (\varphi_{i+1} + \varphi_{i-1}) + w_i$
   \ENDFOR
   \STATE {\bfseries Return:} $\varphi$
\end{algorithmic}
\end{algorithm}

\begin{algorithm}[tb]
   \caption{Construction of density-weighted graph}
   \label{alg:density-graph}
\begin{algorithmic}
   \STATE {\bfseries Input:} data matrix $X$, density function $p$, inverse temperature $\beta$, neighbors $k$, segments $S$
   \vspace{1pt}
   \STATE $\mathcal{G} = \mathtt{KNN}(X, k)$ \COMMENT{Construct k-nearest neighbor graph}
   \vspace{1pt}
   \FOR{edge $(l,m)$ {\bfseries in} $\mathcal{G}$}
   \STATE $\Delta x = (X_m - X_l)/S$ 
   \FOR{$i=1$ {\bfseries to} $S$}
   \STATE $y_{i-1/2} = X_l + (i-0.5) \Delta x$ 
   \ENDFOR
   \STATE $\mathcal{G}_{lm} = \Delta x \sum_{i=1}^S 1/p(y_{i-1/2})^\beta$
   \ENDFOR
   \vspace{1pt}
   \STATE {\bfseries Return:} $\mathcal{G}$
\end{algorithmic}
\end{algorithm}

\begin{figure*}[t]
    \centering
    \includegraphics[width=0.8\linewidth]{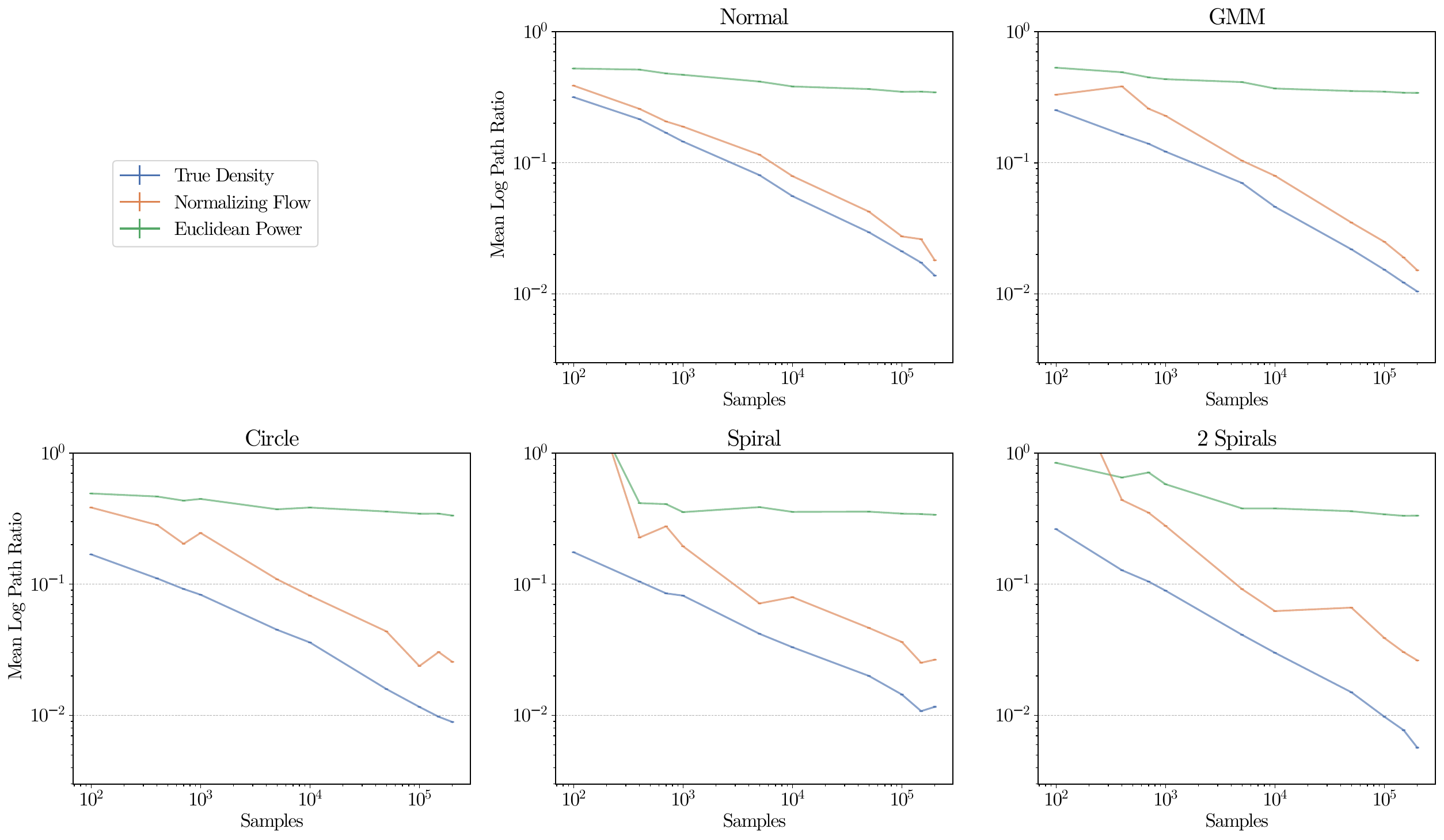}
    \caption{The length of paths in learned and ground-truth density weighted graphs (orange and blue) converge to the ground truth at roughly the same rate with increasing sample size, while the power-weighted graph method (green) converges at a far inferior rate.}
    \label{fig:power-graph-does-not-converge}
\end{figure*}

\subsection{Relaxation with a Learned Score Model}

The relaxation algorithm described in \cref{sec:relaxation} requires only the score function $s(x)$, not the density itself. This observation allows us to use a score model trained directly via score matching, bypassing potential issues with density estimation. In practice, we find this approach more effective than computing scores by differentiating a normalizing flow (see experimental results in \cref{sec:experiments}).

To ensure robust convergence, we initialize the path using the graph-based method from the previous section. This initialization provides a reasonable approximation of the geodesic, helping the relaxation process avoid local minima and converge more quickly to the optimal path.

\section{Experiments}
\label{sec:experiments}

We have published the code to reproduce these experiments in the following GitHub repository: \url{https://github.com/vislearn/Fermat-Distance}.

\paragraph{Performance Metric} To evaluate the quality of computed paths, we introduce the Log Path Ratio (LPR) metric that can be applied to any path $\varphi$ connecting two points $x_1$ and $x_2$. We compute the path length $L(\varphi)$ using the ground truth density and compare it to the true geodesic distance:
\begin{equation}
    \operatorname{LPR}(\varphi) = \log \frac{L(\varphi)}{\operatorname{dist}(x_1, x_2)}
\end{equation}
This metric is zero only for the true geodesic and positive otherwise, providing a principled way to evaluate different methods when ground truth densities are available.

\subsection{Convergence Analysis on 2D Datasets}

To evaluate convergence behavior, we analyze five two-dimensional datasets with known ground truth densities, ranging from simple unimodal distributions to complex multimodal mixtures (shown in \cref{fig:power-graph-does-not-converge}). For each dataset, we compute the average LPR over 1000 randomly sampled paths. Full experimental details and dataset descriptions are provided in the supplementary material.

\subsubsection{Poor Convergence of Power-Weighted Methods}
\label{sec:power-graph-convergence}

We first examine the power-weighted graph method of \citet{bijral2012semi}, which constructs paths on k-nearest neighbor graphs. Despite theoretical guarantees, we find that performance improves extremely slowly with increasing sample size across all datasets (\cref{fig:power-graph-does-not-converge}, green lines). 

\subsubsection{Density Estimation is the Bottleneck}

To identify the source of this poor convergence, we developed several variant methods that use different nearest-neighbor density estimators while maintaining the same graph structure (see supplementary material for details). All variants show similarly poor performance, while their counterparts using ground truth densities converge rapidly. This clearly identifies density estimation as the primary bottleneck.

\subsubsection{Improved Convergence through Better Density Estimation}

Having identified poor density estimation as the core issue, we replace nearest-neighbor estimates with a learned normalizing flow model. Using this density estimate in \cref{alg:density-graph}, we find that the resulting paths converge at nearly the same rate as those computed using ground truth densities (\cref{fig:power-graph-does-not-converge}, orange and blue lines). This demonstrates that modern density estimation techniques can effectively close the gap between theoretical guarantees and practical performance. Full experimental details can be found in the supplementary material.

\subsection{Scaling the Dimension}

\begin{figure}[tb]
    \centering
    \includegraphics[width=\linewidth]{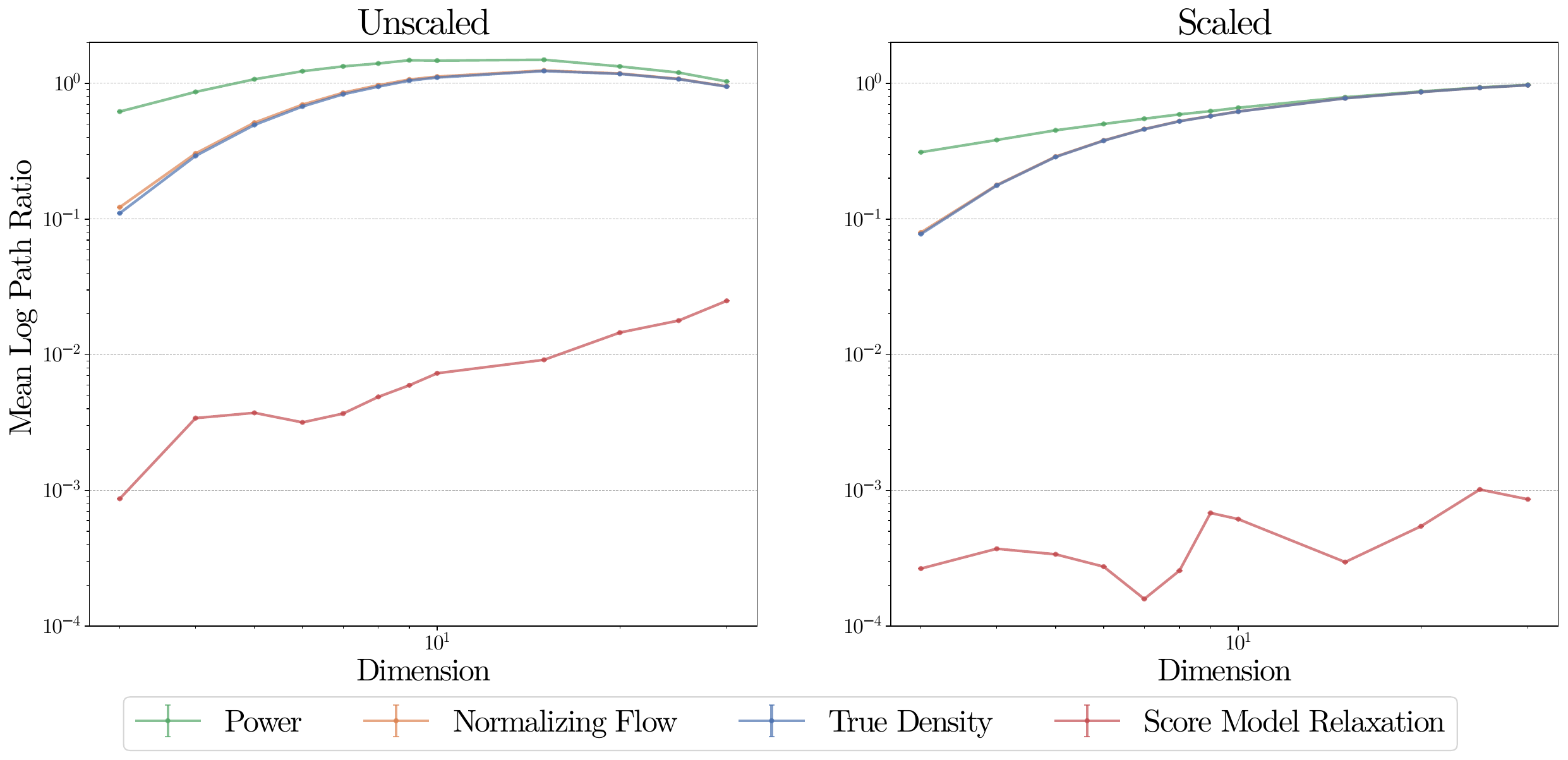}
    \caption{Mean LPR with a sample size of 200,000. Graph-based methods (first three) show rapidly degrading performance as dimension increases, even with the scaled metric ($\beta=1/D$). In contrast, relaxation with a learned score function (red line) maintains good performance across dimensions, effectively avoiding the curse of dimensionality.}
    \label{fig:graph-paths-high-dims}
\end{figure}

Machine learning datasets often lie in high-dimensional spaces, making dimensional scaling behavior crucial for practical applications. We investigate this by extending one of our two-dimensional datasets (the standard normal) up to 25 dimensions, testing both the scaled ($\beta = 1/D$) and unscaled ($\beta = 1$) versions of the metric.

\subsubsection{Graph-Based Methods Fail in Higher Dimensions}

All graph-based methods show rapidly degrading performance as dimension increases, regardless of whether we use scaled or unscaled metrics (\cref{fig:graph-paths-high-dims}). This failure is a direct consequence of the curse of dimensionality: with fixed sample size but increasing dimension, the probability of finding sample points near the true geodesic becomes vanishingly small. As a result, graph-based paths become increasingly jagged approximations of the smooth ground truth trajectory.

\subsubsection{Score-Based Relaxation Overcomes Dimensional Scaling}

To address this fundamental limitation of graph-based methods, we apply our relaxation algorithm (\cref{alg:relaxation}) with a learned score function. While one might expect to obtain scores by differentiating a trained normalizing flow, we find these derived scores too noisy for reliable convergence (see \cref{app:density-score-tradeoff} for analysis of this phenomenon). Instead, direct score estimation via sliced score matching provides smoother gradients that enable stable relaxation.

This score-based approach dramatically improves performance, maintaining good convergence across dimensions for both scaled and unscaled metrics, though the scaled version ($\beta=1/D$) shows more reliable behavior (\cref{fig:graph-paths-high-dims}). Detailed experimental results and analysis can be found in the supplementary material.

\subsection{MNIST}
We also perform preliminary experiments on MNIST within the latent space of an autoencoder, as described in \cref{app:mnist}. We find that we obtain interpretable and reasonable geodesics connecting digits within this space, and that digits cluster together in an unsupervised way.

\section{Conclusion}

We have presented a systematic study of geodesic computation in density-based metrics, using distributions with known ground truth to rigorously evaluate different approaches. Our investigation reveals several key insights:

First, we demonstrate that previous graph-based methods, despite their theoretical guarantees, converge impractically slowly even for simple distributions like the standard normal. Second, we show that this poor convergence stems primarily from inadequate density estimation rather than the graph approximation itself. Third, we establish that modern deep learning approaches—normalizing flows for density estimation and score matching for gradients—can dramatically improve performance.

However, significant challenges remain. While our methods work well on toy distributions with known densities, extending them to complex real-world data requires further research. Key directions for future work include:
\begin{itemize}
    \item Unifying normalizing flows and score models into a single framework to improve efficiency
    \item Developing theoretical understanding of why our methods converge effectively while previous approaches with consistency guarantees do not
    \item Extending these methods to higher-dimensional densities typical in real applications
\end{itemize}

This work establishes a foundation for practical density-based distances, bringing theoretical elegance closer to practical applicability in metric learning.

\section*{Impact Statement}

This paper presents work whose goal is to advance the field of Machine Learning. There are many potential societal consequences of our work, none which we feel must be specifically highlighted here.

\section*{Acknowledgments}
This work is supported by Deutsche Forschungsgemeinschaft (DFG, German Research Foundation) under Germany's Excellence Strategy EXC-2181/1 - 390900948 (the Heidelberg STRUCTURES Cluster of Excellence). 
The authors acknowledge support by the state of Baden-Württemberg through bwHPC and the German Research Foundation (DFG) through grant INST 35/1597-1 FUGG.

\bibliography{references}
\bibliographystyle{icml2025}

\newpage
\appendix
\onecolumn

\section{Derivations}
\label{app:derivations}

\subsection{Geodesic Equation}
\label{app:geodesic-eqn}

We begin by deriving the geodesic equation for our conformal metric. This is a well-known result in differential geometry (see e.g., \citet[Appendix G]{carroll2004spacetime}). The key steps are computing the Christoffel symbols and then simplifying the resulting equation.

\begin{lemma}[Christoffel Symbols for Conformal Metrics]
\label{lemma:christoffel}
For a conformal metric of the form $g_{ij} = \lambda^2 \delta_{ij}$, the Christoffel symbols are:
\begin{equation}
    \Gamma^i_{jk} = \frac{\partial \log \lambda}{\partial x^k} \delta^i_j + \frac{\partial \log \lambda}{\partial x^j} \delta^i_k - \frac{\partial \log \lambda}{\partial x^l} \delta^{il} \delta_{jk}
\end{equation}
\end{lemma}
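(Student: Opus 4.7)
The plan is to apply the standard Levi-Civita formula for Christoffel symbols directly to the conformal metric and simplify. Recall the general expression
\begin{equation}
    \Gamma^i_{jk} = \tfrac{1}{2} g^{il} \left( \partial_j g_{lk} + \partial_k g_{lj} - \partial_l g_{jk} \right).
\end{equation}
For the conformal metric $g_{ij} = \lambda^2 \delta_{ij}$ I first record the inverse metric $g^{ij} = \lambda^{-2} \delta^{ij}$, which is immediate since the metric is a scalar multiple of the identity. This reduces everything to scalar manipulations involving $\lambda$ and the Kronecker delta.

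Next I would compute the derivatives $\partial_j g_{lk} = 2 \lambda (\partial_j \lambda) \delta_{lk}$, and similarly for the cyclic permutations. Substituting back into the formula and pulling the common factor $2\lambda$ out of the parentheses cancels against the $\tfrac{1}{2} \lambda^{-2}$ prefactor, leaving $\lambda^{-1} \delta^{il}$ multiplied by the bracket $\partial_j \lambda \, \delta_{lk} + \partial_k \lambda \, \delta_{lj} - \partial_l \lambda \, \delta_{jk}$. Contracting the free index $l$ with $\delta^{il}$ turns the first two terms into $\partial_j \lambda \, \delta^i_k$ and $\partial_k \lambda \, \delta^i_j$, while the third term has no $l$ in either Kronecker delta except through $\delta^{il}$ itself and so remains as $-\partial_l \lambda \, \delta^{il} \delta_{jk}$.

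Finally I would convert the prefactor $\lambda^{-1}$ into a logarithmic derivative via the identity $\lambda^{-1} \partial_\mu \lambda = \partial_\mu \log \lambda$, applied to each of the three terms. This gives exactly the claimed expression
\begin{equation}
    \Gamma^i_{jk} = \frac{\partial \log \lambda}{\partial x^k} \delta^i_j + \frac{\partial \log \lambda}{\partial x^j} \delta^i_k - \frac{\partial \log \lambda}{\partial x^l} \delta^{il} \delta_{jk}.
\end{equation}
The derivation is entirely mechanical; there is no real obstacle beyond carefully tracking the index positions and the symmetry of the two leading terms in $j \leftrightarrow k$. The only point worth being slightly careful about is that the final term retains an explicit $\delta^{il}$ rather than being rewritten as $\delta^i_{\,l}$, since the third derivative is naturally contracted through the inverse metric on the ambient flat background and this keeps the transformation properties of the Christoffel symbols manifest when used later in \cref{thm:geodesic}.
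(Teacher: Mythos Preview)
Your proposal is correct and follows essentially the same route as the paper: both start from the Levi-Civita formula, substitute $g_{ij}=\lambda^2\delta_{ij}$ and $g^{ij}=\lambda^{-2}\delta^{ij}$, reduce the $\tfrac{1}{2}\lambda^{-2}\cdot 2\lambda$ prefactor to $\lambda^{-1}$, and identify $\lambda^{-1}\partial_\mu\lambda$ as $\partial_\mu\log\lambda$. Your write-up is in fact a bit more explicit about the index contractions than the paper's, which simply says ``the result follows by simplifying the Kronecker delta products.''
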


\begin{proof}
The Christoffel symbols are given by:
\begin{equation}
    \Gamma^i_{jk} = \frac{1}{2} g^{il} \left( \frac{\partial g_{lj}}{\partial x^k} + \frac{\partial g_{lk}}{\partial x^j} - \frac{\partial g_{jk}}{\partial x^l} \right)
\end{equation}
For our metric:
\begin{align}
\Gamma^i_{jk} &= \frac{1}{2\lambda^2} \delta^{il} \left( \frac{\partial \lambda^2}{\partial x^k} \delta_{lj} + \frac{\partial \lambda^2}{\partial x^j} \delta_{lk} - \frac{\partial \lambda^2}{\partial x^l} \delta_{jk} \right) \\
&= \delta^{il} \left( \frac{\partial \log \lambda}{\partial x^k} \delta_{lj} + \frac{\partial \log \lambda}{\partial x^j} \delta_{lk} - \frac{\partial \log \lambda}{\partial x^l} \delta_{jk} \right)
\end{align}
The result follows by simplifying the Kronecker delta products.
\end{proof}

\begin{theorem}[Geodesic Equation]
\label{thm:geodesic}
For a metric of the form $g_{ij} = p^{-2\beta} \delta_{ij}$, where $p$ is a probability density and $\beta$ is a constant, the geodesic equation in vector notation is:
\begin{equation}
    \ddot\gamma - 2 \beta (s(\gamma) \cdot \dot\gamma) \dot\gamma + \beta s(\gamma) \|\dot\gamma\|^2 = 0
\end{equation}
where $s = \nabla \log p$ is the score function of the probability density.
\end{theorem}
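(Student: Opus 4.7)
The plan is to start from the standard coordinate form of the geodesic equation, $\ddot\gamma^i + \Gamma^i_{jk}\dot\gamma^j \dot\gamma^k = 0$, and substitute the Christoffel symbols provided by \cref{lemma:christoffel}. Since our conformal factor is $\lambda = p^{-\beta}$, we have $\log \lambda = -\beta \log p$, so each derivative $\partial_k \log \lambda$ becomes $-\beta s_k$, where $s = \nabla \log p$ is the score. This identification is the one substitution that connects the general conformal result of the lemma to the density-based setting of the theorem.

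Next, I would expand $\Gamma^i_{jk}\dot\gamma^j \dot\gamma^k$ term by term using the three contributions from \cref{lemma:christoffel}. The first term, $\partial_k \log \lambda \, \delta^i_j \dot\gamma^j \dot\gamma^k$, contracts to $-\beta (s\cdot\dot\gamma)\dot\gamma^i$; the second term is identical by symmetry of $\dot\gamma^j \dot\gamma^k$ in $j,k$, producing another copy of $-\beta (s\cdot\dot\gamma)\dot\gamma^i$; and the third term, $-\partial_l \log \lambda \, \delta^{il}\delta_{jk}\dot\gamma^j\dot\gamma^k$, contracts to $+\beta s^i \|\dot\gamma\|^2$. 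Adding these to $\ddot\gamma^i$ and passing from index to vector notation yields exactly the claimed equation.

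There is no genuine obstacle here: the proof is a direct computation once the Christoffel symbols are in hand. The only points requiring mild care are (i) tracking the sign introduced by $\lambda = p^{-\beta}$ so that a single factor of $-\beta$ is pulled out of each derivative, and (ii) recognizing that the first two Christoffel terms combine (rather than cancel) thanks to the symmetry of the outer product $\dot\gamma^j \dot\gamma^k$, which is what produces the coefficient $2\beta$ in the middle term of the stated equation.
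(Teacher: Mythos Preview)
Your proposal is correct and follows essentially the same approach as the paper: identify $\lambda = p^{-\beta}$ so that $\partial_k \log\lambda = -\beta s_k$, substitute into the Christoffel symbols from \cref{lemma:christoffel}, contract with $\dot\gamma^j\dot\gamma^k$, and rewrite in vector form. Your explanation of how the symmetry in $j,k$ doubles the middle term to produce the coefficient $2\beta$ is in fact slightly more explicit than the paper's own presentation.
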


\begin{proof}
For our metric, $\lambda = p^{-\beta}$ and hence $\frac{\partial \log \lambda}{\partial x} = -\beta \frac{\partial \log p}{\partial x} = -\beta s$. Substituting into the formula for Christoffel symbols:
\begin{equation}
    \Gamma^i_{jk} = -\beta s_k \delta^i_j - \beta s_j \delta^i_k + \beta s_i \delta^{il} \delta_{jk}
\end{equation}
The geodesic equation $\ddot\gamma^i + \Gamma^i_{jk} \dot\gamma^j \dot\gamma^k = 0$ then becomes:
\begin{equation}
    \ddot\gamma^i - \beta s_k \dot\gamma^i \dot\gamma^k - \beta s_j \dot\gamma^j \dot\gamma^i + \beta s_l \delta^{il} \delta_{jk} \dot\gamma^j \dot\gamma^j = 0
\end{equation}
which simplifies to the desired vector form.
\end{proof}

\begin{theorem}[Constant Euclidean Speed Parameterization]
\label{thm:constant-speed}
Under a reparameterization that maintains constant Euclidean speed, the geodesic equation becomes:
\begin{equation}
    \label{appeq:constant-speed-geodesic}
    \ddot\varphi - \beta (s(\varphi) \cdot \dot\varphi) \dot\varphi + \beta s(\varphi) \|\dot\varphi\|^2 = 0
\end{equation}
\end{theorem}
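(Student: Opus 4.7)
The plan is to derive \cref{appeq:constant-speed-geodesic} by change of variables, starting from the constant-metric-speed form already proved in \cref{thm:geodesic}. Let $\gamma(t)$ be a geodesic satisfying that equation, and let $\tau$ denote Euclidean arc length along $\gamma$, so that $\tau'(t) = \|\gamma'(t)\|$. Define $\varphi(\tau) = \gamma(t(\tau))$, using a dot for $d/d\tau$ and a prime for $d/dt$. By construction $\|\dot\varphi\|$ is constant in $\tau$, which is the parameterization we want.

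Next I would apply the chain rule to express everything in the equation in terms of $\varphi$ and its $\tau$-derivatives. Specifically, $\gamma' = \dot\varphi\, \tau'$ and $\gamma'' = \ddot\varphi\,(\tau')^2 + \dot\varphi\, \tau''$. Substituting these into the original geodesic equation and dividing through by $(\tau')^2$ yields an equation of the same shape as the target, except with the coefficient $-2\beta$ in front of the $(s\cdot\dot\varphi)\dot\varphi$ term and an extra leftover term proportional to $\dot\varphi\, \tau''/(\tau')^2$. The whole point of the proof is to show that this leftover term contributes exactly $+\beta(s\cdot\dot\varphi)\dot\varphi$, converting the $-2\beta$ into $-\beta$.

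To see that this cancellation happens, I would exploit the fact that $\gamma$ has constant metric speed, i.e.\ $\|\gamma'\|/p(\gamma)^\beta$ is independent of $t$. Taking $d/dt$ of the logarithm of this quantity gives the identity $\gamma'\cdot\gamma''=\beta(s(\gamma)\cdot\gamma')\|\gamma'\|^2$. Combined with $\tau'' = (\gamma'\cdot\gamma'')/\|\gamma'\|$ and $\gamma'=\dot\varphi\,\tau'$, a short calculation gives $\tau''/(\tau')^2 = \beta(s(\varphi)\cdot\dot\varphi)$, which is exactly the factor needed to turn $-2\beta$ into $-\beta$ in the substituted equation. Rearranging yields \cref{appeq:constant-speed-geodesic}.

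The only subtle step is the last one: one has to remember that the relation $\gamma'\cdot\gamma''=\beta(s\cdot\gamma')\|\gamma'\|^2$ is not an extra assumption but is automatically enforced by the original geodesic equation together with the affine parameterization implicit in \cref{thm:geodesic}. (One can verify it independently by dotting the original equation with $\gamma'$ and collecting terms.) Once that identity is in hand, the rest of the argument is routine chain-rule bookkeeping, and the factor-of-two discrepancy noted in the paper falls out cleanly.
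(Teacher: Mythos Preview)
Your argument is correct and follows essentially the same route as the paper: a chain-rule substitution of $\gamma' = \dot\varphi\,\tau'$ and $\gamma'' = \ddot\varphi\,(\tau')^2 + \dot\varphi\,\tau''$ into \cref{thm:geodesic}, then identification of the extra term $\dot\varphi\,\tau''/(\tau')^2$ as $\beta(s\cdot\dot\varphi)\dot\varphi$, which converts the coefficient $-2\beta$ into $-\beta$. The only cosmetic difference is how that identity is obtained: the paper imposes the constant-Euclidean-speed condition $\dot\varphi\cdot\ddot\varphi=0$ and dots the substituted equation with $\dot\varphi$ to solve for $\ddot f = \beta(s\cdot\dot\varphi)\dot f^2$, whereas you fix the new parameter as Euclidean arc length from the outset and read off $\tau''/(\tau')^2$ from the constant-\emph{metric}-speed property of $\gamma$ (equivalently, by dotting the original equation with $\gamma'$). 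These are dual ways of extracting the same scalar relation, and the remaining bookkeeping is identical.
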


\begin{proof}
Let $f: [0,1] \rightarrow [0,1]$ be a strictly increasing diffeomorphism and define $\varphi(u) = \gamma(f^{-1}(u))$. The derivatives are related by:
\begin{align}
    \dot \varphi \dot f &= \dot \gamma \\
    \ddot\varphi \dot f^2 + \dot\varphi \ddot f &= \ddot \gamma
\end{align}
For constant Euclidean speed, we require:
\begin{equation}
    \frac{d}{du} \left( \frac{1}{2} \|\dot \varphi\|^2 \right) = \dot \varphi \cdot \ddot \varphi = 0
\end{equation}
Substituting into the geodesic equation and solving for $\ddot f$ yields:
\begin{equation}
    \ddot f = \beta (s(\gamma) \cdot \dot \varphi) \dot f^2
\end{equation}
The result follows by substituting back and simplifying.
\end{proof}

\subsection{Relaxation Scheme} 
\label{app:relaxation}

The central finite difference approximations for the first and second derivatives are 
\begin{equation}
    \dot\varphi_i \approx \frac{\varphi_{i+1} - \varphi_{i-1}}{2h}
\end{equation}
and
\begin{equation}
    \ddot\varphi_i \approx \frac{\varphi_{i+1} - 2\varphi_i + \varphi_{i-1}}{h^2}
\end{equation}
By substituting this into the differential equation (\cref{appeq:constant-speed-geodesic}) we have
\begin{equation}
    \frac{\varphi_{i+1} - 2\varphi_i + \varphi_{i-1}}{h^2} - \beta (s(\varphi_i) \cdot \dot\varphi_i) \dot\varphi_i + \beta s(\varphi_i) \|\dot\varphi_i\|^2 \approx 0
\end{equation}
where $\dot\varphi_i$ is estimated using finite differences. We can rearrange this to
\begin{equation}
    \varphi_i \approx \frac{\varphi_{i+1} + \varphi_{i-1}}{2} + \frac{h^2 \beta}{2} \left( s(\varphi_i) \|\dot\varphi_i\|^2 - (s(\varphi_i) \cdot \dot\varphi_i) \dot\varphi_i \right)
\end{equation}
and use the equation as an update rule, updating each position of the curve except the endpoints at each iteration.

Dividing by very small $h$ could lead to numerical instability. We can avoid dividing and multiplying by $h$ by the following update rule. First define
\begin{equation}
    v_i = \frac{\varphi_{i+1} - \varphi_{i-1}}{2}
\end{equation}
then update with
\begin{equation}
    \varphi_i = \frac{\varphi_{i+1} + \varphi_{i-1}}{2} + \frac{\beta}{2} \left( s(\varphi_i) \|v_i\|^2 - (s(\varphi_i) \cdot v_i) v_i \right)
\end{equation}

\subsection{Justification of Scaling Fermat Distances with Dimension}
\label{app:scaling-justification}
We propose to scale the metric using a temperature equal to the dimension of the data. In order to justify this, consider the behavior of a generalized Gaussian distribution of the form $p(x) \propto \exp(-\|x\|^\alpha/\alpha)$. It can be shown that $\mathbb{E}[\|x\|^\alpha] = D$ and thus the typical distance from the origin of points sampled from $p$ scales as $D^{1/\alpha}$. The score in this case is $s(x)=-\|x\|^{\alpha-2} x$.

Due to the rotational symmetry of the distribution, we can suppose without loss of generality that a sample $x$ lies in the plane of the first two coordinates and hence the problem reduces to the 2-dimensional one. However, since the typical distance of sampled points scales with the dimension, in order to avoid very exaggerated geodesics in high dimensions which deviate sharply towards the origin, we need to scale the density appropriately. 
By rescaling $\gamma$ in \cref{eq:geodesic} by $D^{1/\alpha}$, the score scales like $D^{(\alpha-1)/\alpha}$, and so the terms in the equation scale as:
\begin{align}
    \ddot\gamma - 2\beta(s\cdot\dot\gamma)\dot\gamma + \beta s\|\dot\gamma\|^2 &\to D^{1/\alpha}\ddot\gamma - 2\beta D^{(\alpha+1)/\alpha}(s\cdot\dot\gamma)\dot\gamma + \beta D^{(\alpha+1)/\alpha}s\|\dot\gamma\|^2 \\ &= D^{1/\alpha} \bigl( \ddot\gamma - 2\beta D(s\cdot\dot\gamma)\dot\gamma + \beta D s\|\dot\gamma\|^2 \bigr)
\end{align}
Therefore the geodesic equation can scale consistently if, and only if, $\beta = 1/D$.
The same reasoning applies to the reparameterized \cref{eq:geodesic-const-eucl-speed}.

We can apply a similar argument to the Student-t distribution of the form $p(x) \propto (1+\|x\|^2/\nu)^{-(\nu+D)/2}$ with $\nu > 2$. In the limit $\nu \to \infty$, the distribution becomes normal, and otherwise is characterized by heavier tails. The expected squared norm $\mathbb{E}[\|x\|^2] = \nu D/(\nu-2)$, so the typical distance from the origin scales like $\sqrt{D}$, as in a Gaussian distribution. The score function has the form $-\tfrac{\nu+D}{\nu+\|x\|^2}x$, and with $x\sim\sqrt{D}$, we have $s\sim x \sim\sqrt{D}$, again the same behavior as the Gaussian. Hence the scaling behavior is approximately the same as the Gaussian distribution, so the above argument for $\beta=1/D$ applies as well. Given that the same scaling behavior holds as in the generalized Gaussian, it suggests that $\beta=1/D$ is a natural default choice across a wide range of distributions.

\section{Experimental Details}
\label{app:experimental-details}

\begin{figure}[ht]
    \centering
    \includegraphics[width=0.8\linewidth]{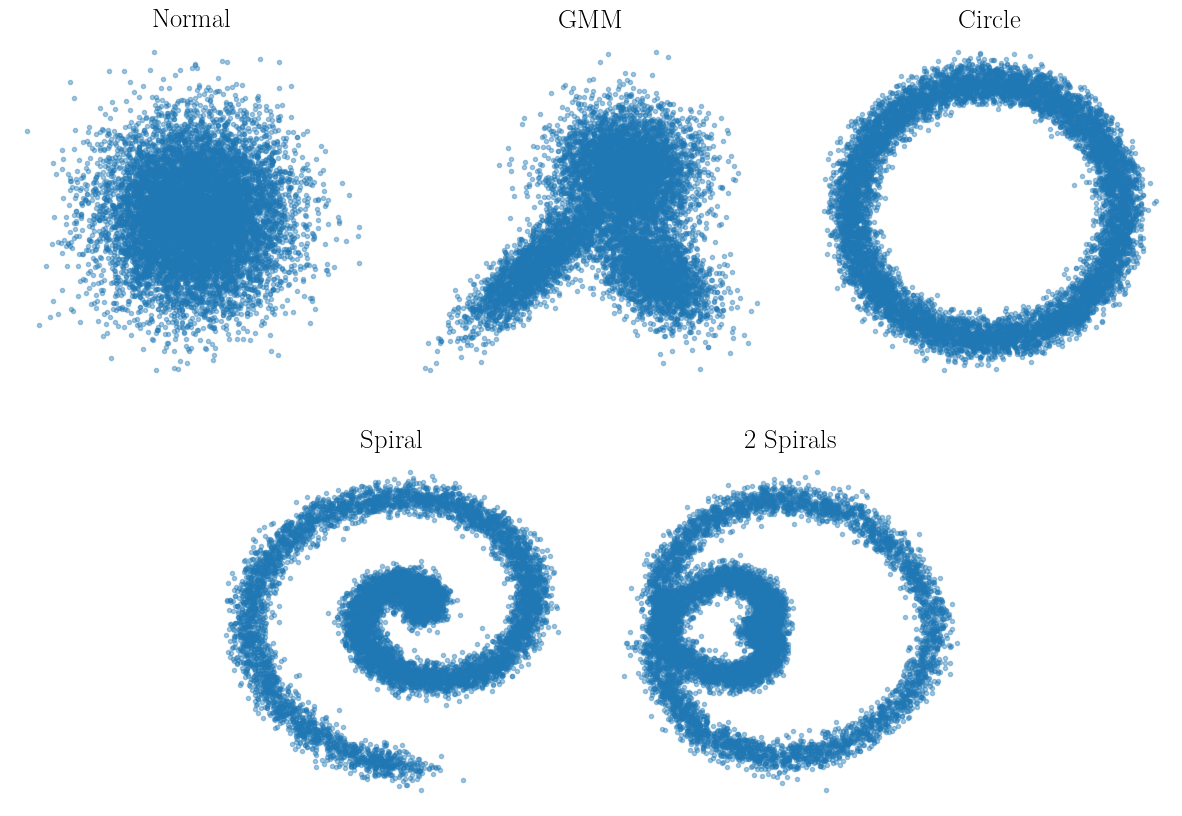}
    \caption{Samples from the 5 two-dimensional datasets.}
    \label{fig:2d-datasets}
\end{figure}

\subsection{Software Libraries} We used the \texttt{numpy} \citep{harris2020array} and \texttt{scipy} \citep{virtanen2020scipy} Python libraries for basic numerical and graph operations and \texttt{matplotlib} \citep{hunter2007matplotlib} for plotting. We used \texttt{scikit-learn} \citep{pedregosa2011scikit} to compute nearest neighbor graphs and \texttt{pytorch} \citep{paszke2019pytorch} for training neural networks. We used \texttt{FrEIA} \citep{freia} for constructing normalizing flows.

\subsection{Two-Dimensional Datasets} We use the following two-dimensional datasets. All are implemented as Gaussian mixture models. The first two are explicitly Gaussian, the final three are GMMs with 50 components fitted using \texttt{scikit-learn} to data generated by adding noise to certain geometric structures. 

\begin{enumerate}
    \item Standard normal distribution
    \item Gaussian mixture with 3 components:
    \begin{itemize}
        \item Means: $\mu_1=(2, 1.4)$, $\mu_2=(6.5, 6.3)$, $\mu_3=(8, 1)$
        \item Covariances: $\Sigma_1=\begin{pmatrix}3 & 2.5\\ 2.5 & 3\end{pmatrix}$, 
        $\Sigma_2=\begin{pmatrix}3 & 0\\ 0 & 3\end{pmatrix}$, 
        $\Sigma_3=\begin{pmatrix}2 & -0.8\\ -0.8 & 2\end{pmatrix}$
        \item Weights: $(0.25, 0.5, 0.25)$
    \end{itemize}
    \item Circle with radius 1 and Gaussian noise ($\sigma=0.08$)
    \item Single spiral with 1.75 rotations ($3.5 \pi$ radians) and Gaussian noise ($\sigma=0.05$)
    \item Two interleaved spirals with 1 rotation each and Gaussian noise ($\sigma=0.045$)
\end{enumerate}

The spirals reach a radius of 1 after a single rotation.

\subsection{Normalizing Flow Training} We used the \texttt{FrEIA} library \citep{freia} to design normalizing flows using spline coupling blocks \citep{durkan2019neural}. 
We used the following hyperparameters:
\begin{itemize}
    \item Blocks: 12 for two-dimensional data, 5 for standard normal in $D$ dimensions
    \item Spline bins: 10
    \item Domain clamping in the splines: 5
    \item ActNorm between blocks
    \item Noise of $10^{-5}$ added to the training data
    \item LR = $5 \times 10^{-4}$
    \item Weight decay = $10^{-6}$
    \item Fully-connected coupling block subnets with 3 hidden layers of 64 dimensions each, with ReLU activation and BatchNorm
    \item Training iterations (as a function of training set size $n$):  $2500 \times 2^{n // 1000}$
    \item Batch size = 256 if $n > 400$, else $\lfloor n/3 \rfloor$
    \item Adam optimizer
\end{itemize}

\subsection{Score Model Training} We use fully connected networks to predict the score, with 6 hidden layers of varying width depending on the input dimension. The architecture consists of:
\begin{itemize}
    \item Input layer → 6 hidden layers → Output layer (dimension matching input)
    \item Hidden layer widths:
    \begin{itemize}
        \item $3 \leq D \leq 5$: 128
        \item $6 \leq D \leq 8$: 170
        \item $9 \leq D \leq 25$: 200
    \end{itemize}
    \item LR = $10^{-3}$
    \item Weight decay = $10^{-6}$
    \item Softplus activation function
    \item Training iterations (as a function of training set size $n$):  $2500 \times 2^{n // 1000}$
    \item Batch size = 256 if $n > 400$, else $\lfloor n/3 \rfloor$
    \item Adam optimizer
\end{itemize}

In all cases, the chosen model checkpoint was that which achieved the lowest training loss.

\subsection{Compute Resources} For the two-dimensional datasets, we performed all experiments on a machine with a NVIDIA GeForce RTX 2070 GPU and 32 GB of RAM. It took approximately 24 hours to run all experiments. 

For the higher dimensional standard normal distributions, we performed all experiments on a machine with two GPUs: a NVIDIA GeForce RTX 2070 and a NVIDIA GeForce RTX 2080 Ti Rev. A. The machine has 32 GB of RAM. It took approximately 24 hours to run all experiments.

In addition we did some preliminary experiments on the first machine, totalling approximately 48 hours of compute time.

\section{Additional Experimental Results}
\label{app:additional-results}

\subsection{MNIST experiments}
\label{app:mnist}

\begin{table}[ht]
\caption{For each digit, we compute the mean and standard deviation of the log distances between the cluster mean and 20 randomly sampled points of that class (intra-class log distance), as well as between the cluster mean and 20 randomly sampled points of other classes (inter-class log distance). We also compute the difference, which gives an indication of how well-separated the digits are in terms of log distance.}
\centering
\begin{tabular}{cccc}
\toprule
Digit & {Intra-Class Log Distance} & {Inter-Class Log Distance} & {Difference (Inter–Intra)} \\
\midrule
0 & 2.41 ± 0.40 & 3.73 ± 0.27 & 1.32 ± 0.49 \\
1 & 1.69 ± 0.49 & 3.73 ± 0.31 & 2.04 ± 0.58 \\
2 & 3.56 ± 0.29 & 4.05 ± 0.19 & 0.49 ± 0.35 \\
3 & 3.03 ± 0.33 & 3.78 ± 0.24 & 0.75 ± 0.41 \\
4 & 3.04 ± 0.38 & 3.77 ± 0.36 & 0.73 ± 0.53 \\
5 & 3.43 ± 0.35 & 4.02 ± 0.18 & 0.59 ± 0.39 \\
6 & 2.89 ± 0.41 & 3.79 ± 0.31 & 0.91 ± 0.51 \\
7 & 2.86 ± 0.47 & 3.81 ± 0.30 & 0.95 ± 0.56 \\
8 & 3.29 ± 0.33 & 3.74 ± 0.21 & 0.46 ± 0.39 \\
9 & 2.76 ± 0.45 & 3.56 ± 0.38 & 0.80 ± 0.59 \\
\bottomrule
\end{tabular}
\label{tab:log-distances}
\end{table}

\begin{figure}[ht]
    \centering
    \includegraphics[width=0.8\linewidth]{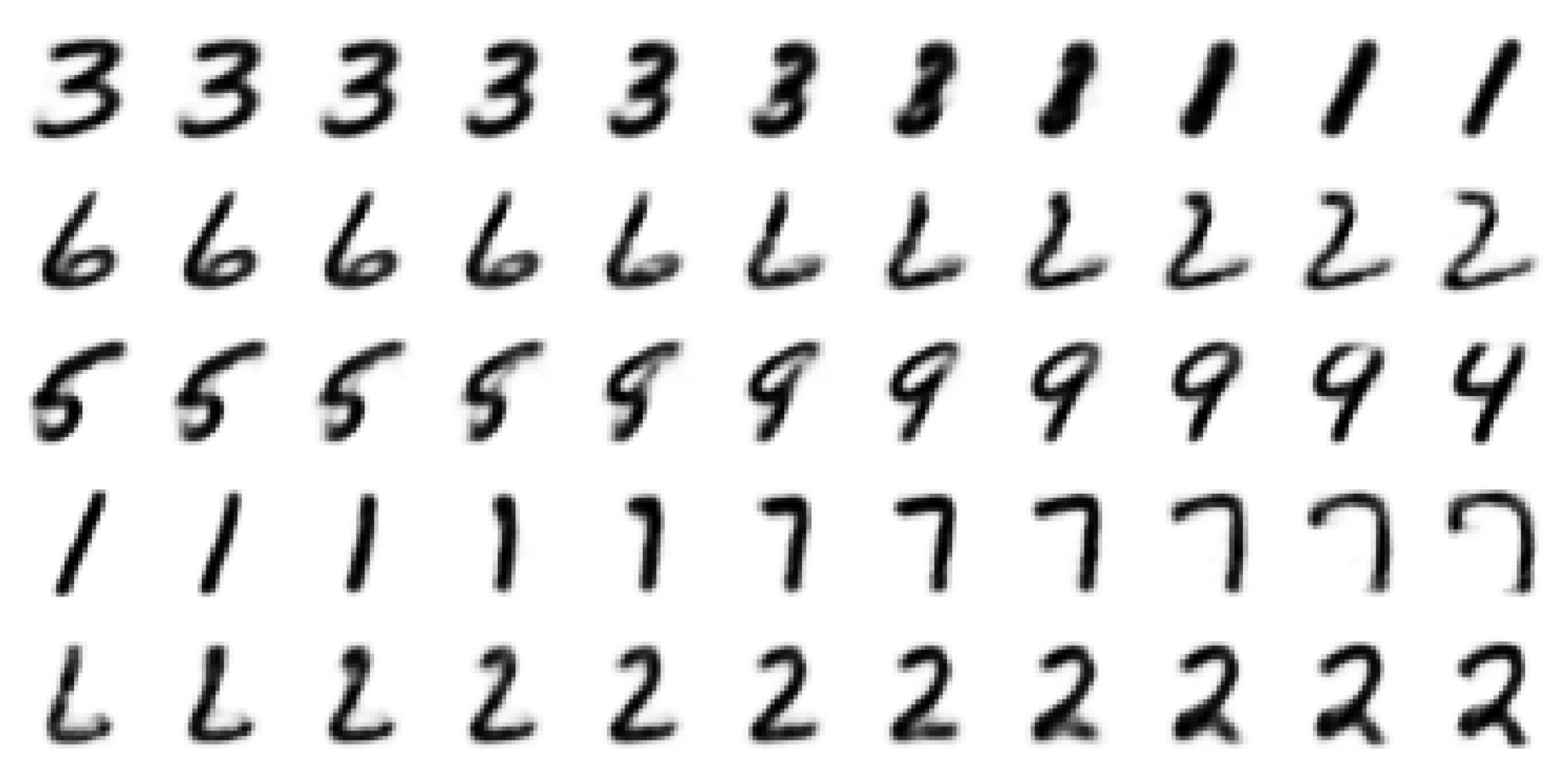}
    \caption{Each row visualizes an MNIST geodesic, obtained by relaxation (\cref{alg:relaxation}) using a learned score model within a 10-dimensional autoencoder latent space.}
    \label{fig:mnist-interpolations}
\end{figure}

We train an autoencoder to compress MNIST \citep{lecun2002gradient} digits to a 10-dimensional space and subsequently train a sliced score matching model \citep{song2020sliced} and normalizing flow \citep{dinh2016density} on the latent distribution. Both encoder and decoder use fully connected layers with ReLU activations, with a single hidden layer of width 400. The score model has input and output dimension 10 and has hidden layers of width 128, 256, and 128, with SiLU activations. The normalizing flow is built with the FrEIA framework \citep{freia}, made up of 4 affine coupling blocks, with a permutation between each block. Each block uses a coupling network with 2 hidden layers of width 128 and SiLU activations. All models are trained for 400 epochs with the Adam optimizer, the autoencoder with learning rate $10^{-3}$ and the score model and normalizing flow with learning rate $10^{-4}$.

To compute distances between two points sampled from the latent distribution, we the same procedure as in the main text: i) construct a density-weighted graph (\cref{alg:density-graph}) from the normalizing flow model ii) use Dijkstra's algorithm to find the shortest path between the points according to the graph iii) solve the relaxation problem (\cref{alg:relaxation}) with the score model, using the graph-based path as initialization iv) evaluate the length of the relaxed path using the normalizing flow model, according to \cref{eq:length-integral}.

In \cref{tab:log-distances} we report mean and standard deviation of log distances between each digit cluster mean and 20 random samples of the same or other digits. We see that same-class distances are always lower than other-class distances, up to a safe margin of error. We also make the following observations: 1s cluster tightly, likely due to having fewer active pixels and therefore having higher likelihoods; 8s are more ambiguous, likely due to easy deformations into other digits (it is quite easy to remove some pixels to turn an 8 into a 3, 5, 6 or 9). We leave deeper analysis of Fermat distances on MNIST to future work.

In \cref{fig:mnist-interpolations} we visualize geodesics between 5 randomly sampled pairs from the latent distribution. The geodesics are solved in the latent space and the resulting latent points are decoded to image space for visualization. We see that the geodesics follow high-density regions of the distribution, leading to interpretable and reasonable transitions between the digits.

\subsection{Convergence of Alternative Graph Edge Weightings}
\label{app:other-edge-weights}

\begin{figure}[ht]
    \centering
    \includegraphics[width=0.6\linewidth]{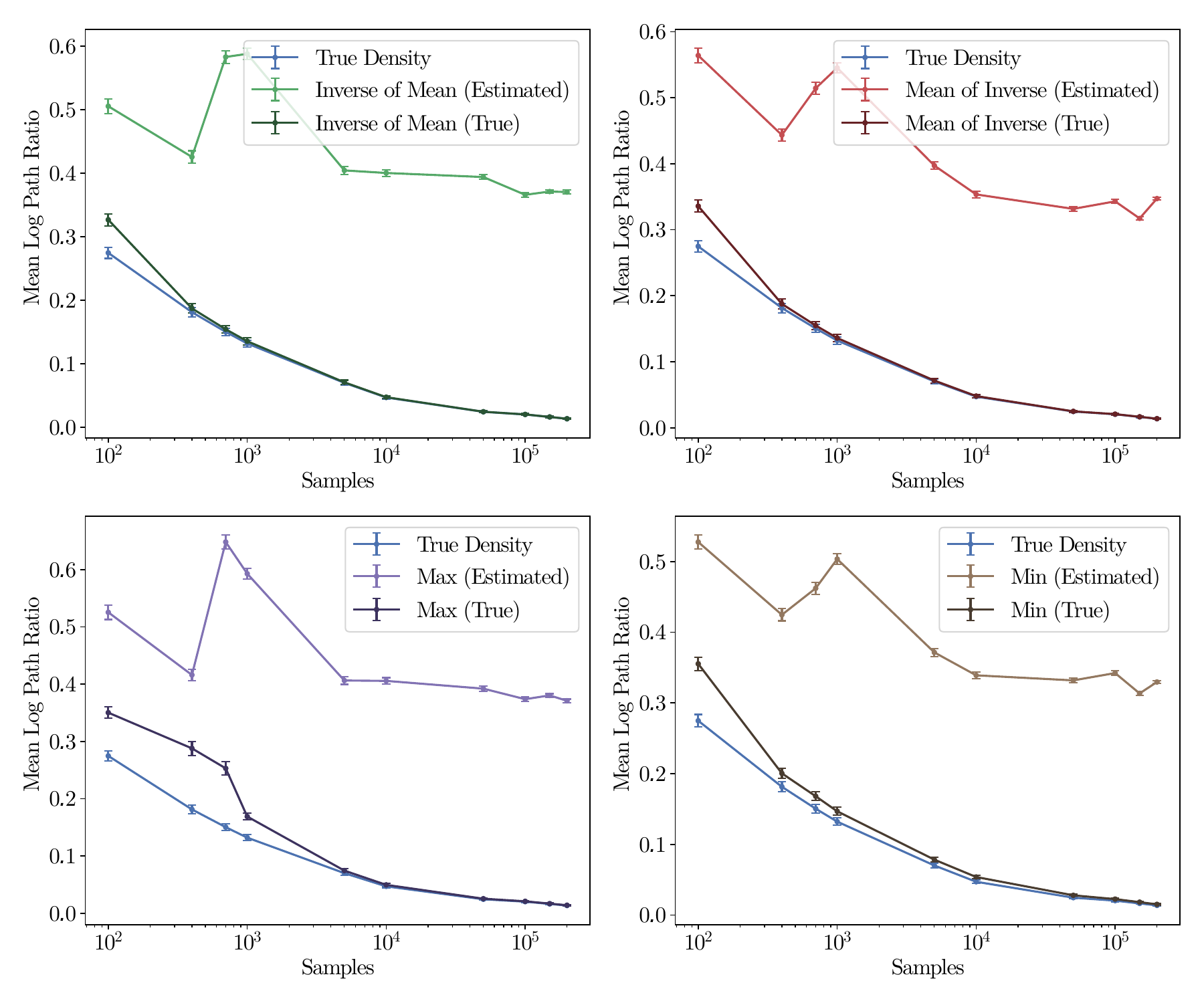}
    \caption{Alternative edge weightings based on the nearest neighbor density estimator converge poorly. The same weightings based on the ground truth density converge well. Experiments performed on the GMM two-dimensional dataset.}
    \label{fig:other-edge-weights}
\end{figure}

We try out other edge weights, as described in the main text. Given the nearest-neighbor density estimates $\hat{p}(X_l) \propto (\min_m \|X_l - X_m\|)^{-d}$ for data points $X_l$, we approximate the graph edges by $\mathcal{G}_{lm} = \|X_l - X_m\|/\tilde{p}((X_l + X_m)/2)$ where $\tilde{p}$ is a density estimate for the midpoint. We use 4 variants:
\begin{enumerate}
    \item \textbf{Inverse of Mean:} $\tilde{p}((X_l + X_m)/2) = (\hat{p}(X_l) + \hat{p}(X_m))/2$
    \item \textbf{Mean of Inverse:} $1/\tilde{p}((X_l + X_m)/2) = (1/\hat{p}(X_l) + 1/\hat{p}(X_m))/2$
    \item \textbf{Max:} $\tilde{p}((X_l + X_m)/2) = \max(\hat{p}(X_l), \hat{p}(X_m))$
    \item \textbf{Min:} $\tilde{p}((X_l + X_m)/2) = \min(\hat{p}(X_l), \hat{p}(X_m))$
\end{enumerate}

We find that all 4 have very similar performance to the power-weighted graph, whereas the equivalent estimators using the ground truth density quickly converge. See \cref{fig:other-edge-weights}.

\subsection{Trade-off Between Log Density and Score Estimation}
\label{app:density-score-tradeoff}

\begin{figure}[ht]
    \centering
    \includegraphics[width=0.7\linewidth]{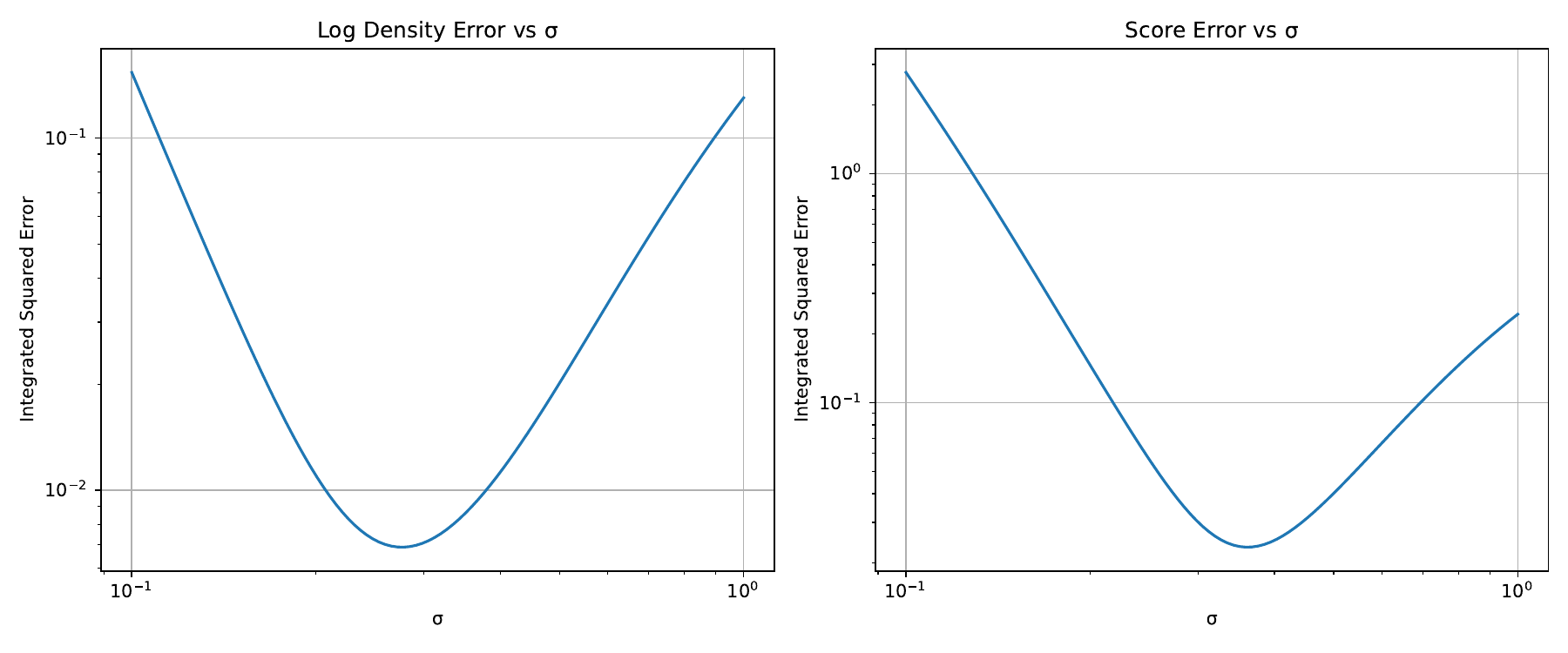}
    \caption{Mean integrated squared error (MISE) for log density and score estimation using KDE with varying bandwidth. The optimal bandwidth for log density estimation is smaller than the optimal bandwidth for score estimation.}
    \label{fig:kde-mise}
\end{figure}

We speculate that there is a fundamental trade-off between accurately estimating log densities and their gradients (score functions) that applies to all density estimation methods, including the normalizing flows used in our main experiments. To illustrate this principle in a simple setting where we can compute exact errors, we analyze kernel density estimation (KDE) of the standard normal distribution in one dimension using Gaussian kernels, with 1000 samples drawn from the distribution. While this example is much simpler than the high-dimensional problems and neural network models considered in the main text, it clearly demonstrates how optimizing for density estimation can lead to poor gradient estimates and vice versa.

For a set of samples $\{x_i\}_{i=1}^n$, the KDE estimate with bandwidth $\sigma$ is:
\begin{equation}
    \hat{p}(x) = \frac{1}{n\sigma\sqrt{2\pi}} \sum_{i=1}^n \exp\left(-\frac{(x-x_i)^2}{2\sigma^2}\right)
\end{equation}

The log density and score estimates are then:
\begin{align}
    \log \hat{p}(x) &= -\log(n\sigma\sqrt{2\pi}) + \log\sum_{i=1}^n \exp\left(-\frac{(x-x_i)^2}{2\sigma^2}\right) \\
    \hat{s}(x) &= -\frac{\sum_{i=1}^n (x-x_i)\exp\left(-\frac{(x-x_i)^2}{2\sigma^2}\right)}{\sigma^2\sum_{i=1}^n \exp\left(-\frac{(x-x_i)^2}{2\sigma^2}\right)}
\end{align}

\begin{figure}[ht]
    \centering
    \includegraphics[width=0.7\linewidth]{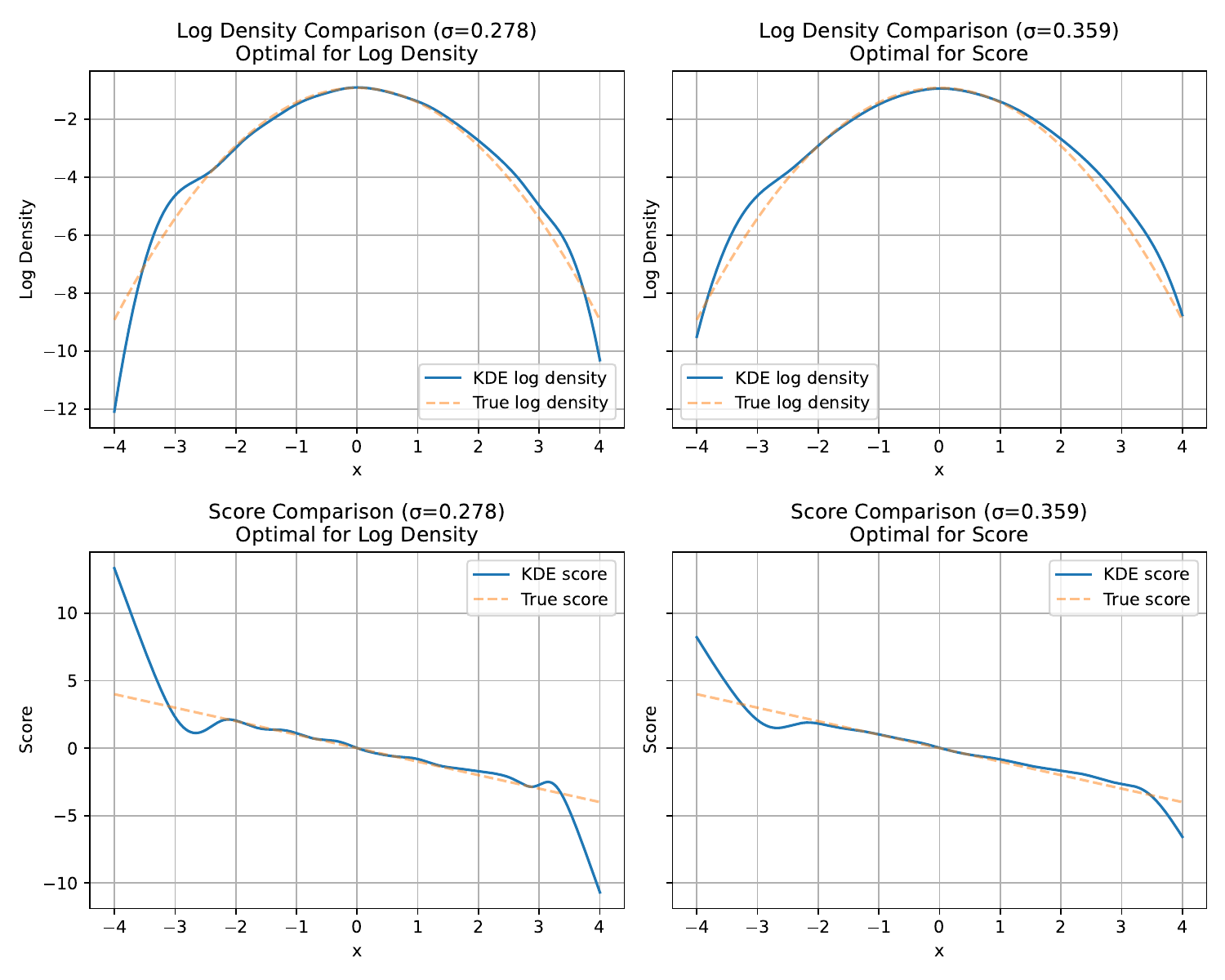}
    \caption{Comparison of KDE estimates using different bandwidths. Top row: Log density estimates. Bottom row: Score estimates. Left column: Using the bandwidth that minimizes log density MISE. Right column: Using the bandwidth that minimizes score MISE. Ground truth shown in dotted orange, KDE estimate in solid blue. The optimal bandwidth for log density leads to noisy score estimates, while the optimal bandwidth for score estimation leads to oversmoothed log densities.}
    \label{fig:kde-estimates}
\end{figure}

As shown in \cref{fig:kde-mise}, the optimal bandwidth for minimizing the mean integrated squared error (MISE) of the log density estimate is smaller than the optimal bandwidth for minimizing the MISE of the score estimate. This reflects a fundamental trade-off: accurate density estimation requires capturing fine details of the distribution, while accurate score estimation requires smoothing out noise to obtain reliable gradients.

\Cref{fig:kde-estimates} visualizes this trade-off. When using the bandwidth optimal for log density estimation, the score estimates are quite noisy. Conversely, when using the bandwidth optimal for score estimation, the log density estimates are oversmoothed. This observation helps explain why we obtain better results using dedicated score models rather than differentiating density models.

\end{document}